\newtheorem{theorem}{\noindent\hspace{1em}\bf Theorem}
\title{\LARGE \bf
Safety-Critical Control for Aerial Physical Interaction in Uncertain Environment}
\author{Jeonghyun Byun$^{1}$, Yeonjoon Kim$^{1}$, Dongjae Lee$^{1}$, and H. Jin Kim$^{1}$
\thanks{$^{1}$ The authors are with the Department of Aerospace Engineering, Automation and System Research Institute(ASRI) and Institute of Advanced Aerospace Technology(IAAT), Seoul National University, Seoul, South Korea.
        {\tt\small \{quswjdgus97, 0831joon, ehdwo713, hjinkim\}@snu.ac.kr}}
}
\begin{document}

\maketitle
\thispagestyle{empty}
\pagestyle{empty}

\begin{abstract}


Aerial manipulation for safe physical interaction with their environments is gaining significant momentum in robotics research. 
In this paper, we present a disturbance-observer-based safety-critical control for a fully actuated aerial manipulator interacting with both static and dynamic structures.
Our approach centers on a safety filter that dynamically adjusts the desired trajectory of the vehicle's pose, accounting for the aerial manipulator's dynamics, the disturbance observer's structure, and motor thrust limits.
We provide rigorous proof that the proposed safety filter ensures the forward invariance of the safety set—representing motor thrust limits—even in the presence of disturbance estimation errors.
To demonstrate the superiority of our method over existing control strategies for aerial physical interaction, we perform comparative experiments involving complex tasks, such as pushing against a static structure and pulling a plug firmly attached to an electric socket. 
Furthermore, to highlight its {repeatability} in scenarios with sudden dynamic changes, {we perform repeated tests of pushing a movable cart and extracting a plug from a socket.} 
These experiments confirm that our method not only outperforms existing methods but also excels in handling tasks with rapid dynamic variations.

\end{abstract}

\section{INTRODUCTION}

Aerial manipulators have gained popularity in robotics research due to their ability to combine the maneuverability of unmanned aerial vehicles (UAVs) with the versatility of robotic manipulators. Their capability to physically interact with the environment makes them ideal for a wide range of applications involving physical interaction such as drawer-opening \cite{kim2015operating}, door-opening \cite{lee2020aerial}, plug-pulling \cite{byun2021stability}, window-cleaning \cite{sun2021switchable}, multi-manual object manipulation \cite{shahriari2022passivity}, non-destructive testing (NDT) \cite{tong2024passive} and heavy object pushing \cite{hwang2024autonomous}.

Such tasks, often referred to as aerial physical interaction (APhI) depicted in Fig. \ref{fig: thumbnail}, require consideration of two important factors when designing controllers. The first is the external disturbance generated by the physical interaction between the vehicle and the surrounding environment, and the second is the vehicle's actuation limit such as motor thrust limit. However, very few studies have focused on controller design for APhI that simultaneously accounts for both disturbance attenuation and actuator limitations.

\begin{figure}[t]
\centering
\vspace{0.25cm}
\includegraphics[width = 0.45\textwidth]{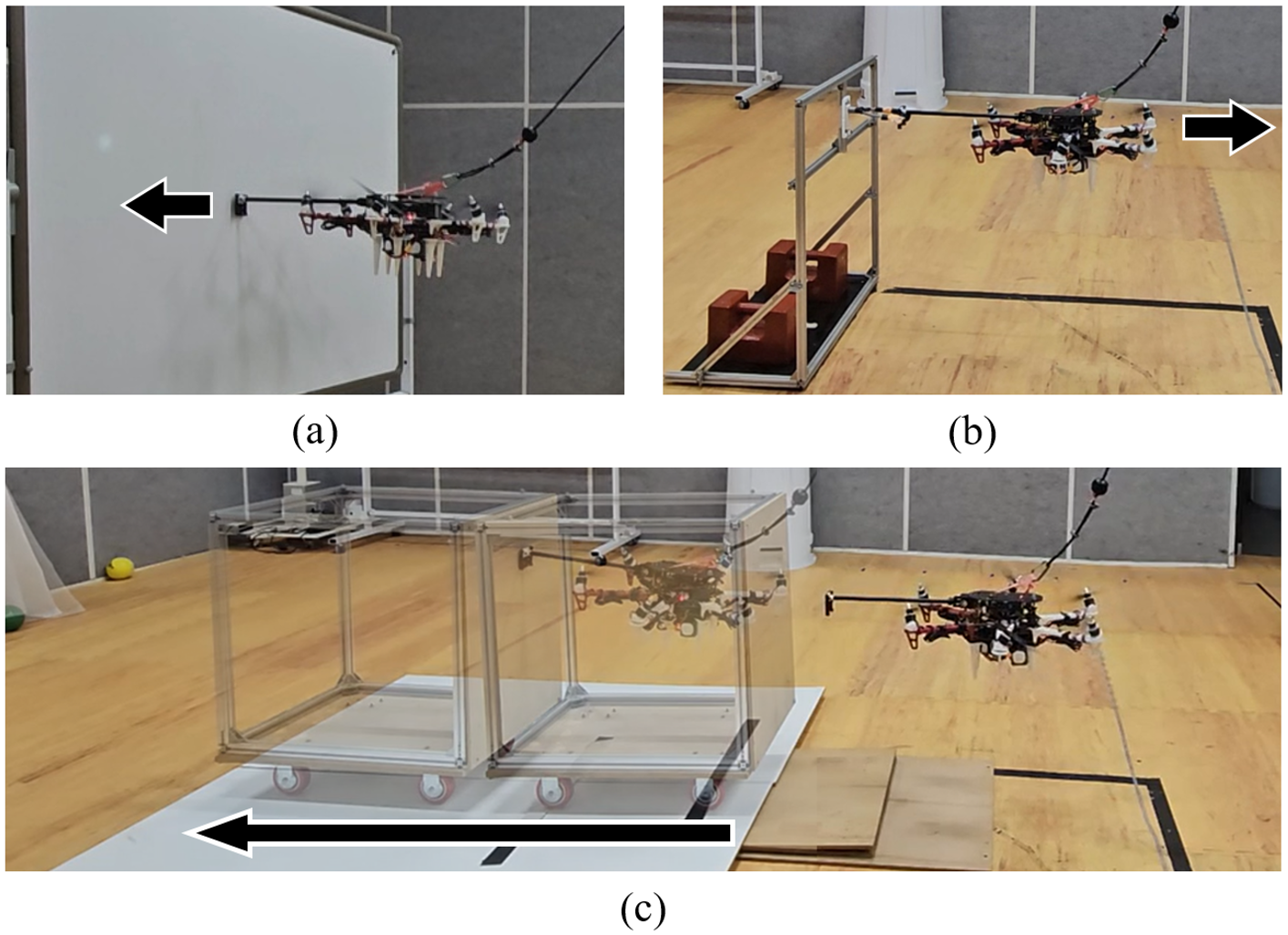}
\vspace{-0.35cm}
\caption{ Aerial physical interaction (APhI) experiments conducted for the validation of the proposed method. (a) Pushing a static structure. (b) Pulling an object from a static structure. (c) Pushing a movable object.} \label{fig: thumbnail}
\end{figure}

In \cite{convens2017control}, the control of a fully actuated UAV with actuator saturation was introduced, and in \cite{afifi2022toward}, the authors presented an actuator limit-aware control strategy for an aerial manipulator physically interacting with a human operator. However, they did not consider the effect of external disturbances or model uncertainties when designing their controllers and only conducted numerical simulations. 

In \cite{brunner2022energy} and \cite{benzi2022adaptive}, energy tank-based controllers for aerial manipulators interacting with uncertain dynamic environments were presented. They proved the vehicle's safety under the effect of external disturbances or model uncertainties. Also, \cite{cuniato2022power} proposed a power-based safety layer for aerial vehicles physically interacting with surrounding environments, and \cite{chaikalis2024optimization} introduced a control method for human-robot APhI that considers motor thrust limitations and external disturbances. However, these four controllers assume that the interaction wrench can be measured by a force/torque sensor or estimated based on a derived model. This reliance may not be suitable for aerial manipulators not equipped with force/torque operating in windy environments.

\cite{lee2024saturated} presented a saturated robust controller for a fully actuated multirotor taking into account the disturbance rejection and rotor thrust saturation. However, they did not consider the situation that reaching the desired pose of the vehicle is not dynamically feasible, e.g., the magnitude of an external disturbance exceeds the motor thrust limit, and only conducted numerical simulations.

In this paper, we present a disturbance-observer (DOB)-based safety-critical control of a fully actuated aerial manipulator physically interacting with static or movable structures. 
To that end, we derive the dynamic model of the aerial manipulator configured with a fully actuated hexacopter and a rigidly attached robotic arm. 
Then, the control framework including a DOB-based control law and a safety filter is introduced. 
While the DOB-based control law is designed as presented in \cite{ha2018disturbance}, for the safety filter, we develop a system model that combines the aerial manipulator's model and DOB structure and formulate an optimization problem to determine the desired pose and twist of the vehicle that conforms to the constraints on motor thrust. In particular, our main contributions are arranged as follows:
\begin{itemize}
    \item We design a safety-critical controller that can conduct various types of APhI with uncertain environments in the absence of interaction wrench measurement or estimation.
    \item We show that the safety set representing the motor thrust limits is forward invariant under the proposed safety filter. 
    \item We validate that our controller outperforms existing approaches for APhI control through pushing and pulling experiments with both static and dynamic structures.
\end{itemize}

This paper is outlined as follows; In Section II, we formulate the dynamic model of our fully actuated aerial manipulator, and the control framework that includes DOB-based control law and safety filter is introduced in Section III. In Section IV, we present a theoretical analysis on the safety filter, and Section V covers the experimental validation of the proposed method.

\textbf{Notations:} $\boldsymbol{0_{i \times j}}$, $\boldsymbol{I_{i}}$, $\boldsymbol{e_3}$ and $\boldsymbol{E_i} \in {\mathbb{R}}^6$ represent the $i\times j$ zero matrix, $i \times i$ identity matrix, $[0;0;1]$ and ${\mathbb{R}}^6$ vector where its $i^{\textrm{th}}$ element is one and the other elements are zero, respectively. For scalars $a_1,\cdots,a_N$, we let $c a_1$ and $s a_1$ denote $\cos{a_1}$ and $\sin{a_1}$, respectively, and diag$\{a_1,\cdots,a_N\}$ represents an $N \times N$ diagonal matrix where the $(i,i)^{\textrm{th}}$ element is $a_i$ and and all off-diagonal elements are zero. For vectors $\boldsymbol{\alpha}$ and $\boldsymbol{\beta}$, we let $\alpha_i$ denote the $i$-th element of $\alpha$, and if $\boldsymbol{\alpha} \in {\mathbb{R}}^{3}$ and $\boldsymbol{\beta} \in {\mathbb{R}}^{3}$, $[\boldsymbol{\alpha}]_{\times} \in \mathbb{R}^{3\times3}$ means the operator which maps $\boldsymbol{\alpha}$ into a skew-symmetric matrix such as $[\boldsymbol{\alpha}]_{\times}\boldsymbol{\beta} = \boldsymbol{\alpha} \times \boldsymbol{\beta}$. Also, for matrices $\boldsymbol{A_1}, \cdots, \boldsymbol{A_N}$, $\textrm{blkdiag}\{ \boldsymbol{A_1}, \cdots, \boldsymbol{A_N} \}$ represents a block diagonal matrix obtained by aligning $\boldsymbol{A_1},\cdots,,\boldsymbol{A_N}$. Moreover, we abbreviate the phrase "with respect to" to w.r.t.. 

  \section{Modelling}

\begin{figure}[t]
\centering
\vspace{0.25cm}
\includegraphics[width = 0.33\textwidth]{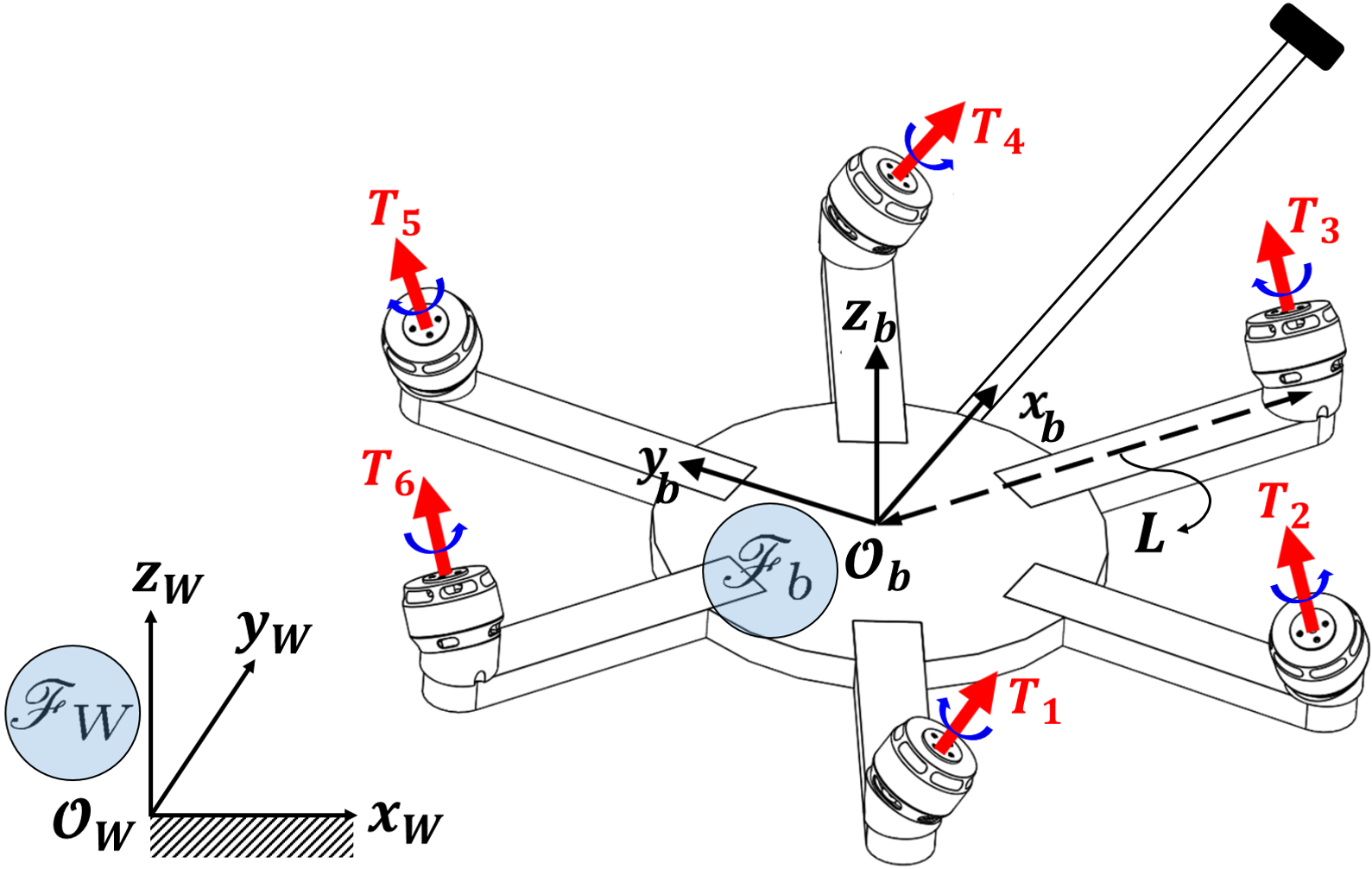}
\vspace{-0.20cm}
\caption{An aerial manipulator configured with a fully actuated multirotor and a rigidly attached robotic arm. The thrust values $T_1, \cdots, T_6$ configure the thrust vector $\boldsymbol{T} \in {\mathbb{R}}^{6}$, and $\mathscr{F}_{W}$ and $\mathscr{F}_b$ represent the Earth-fixed and multirotor frames, respectively.} \label{fig: aerial manipulator with fully actuated multirotor}
\end{figure}

Fig. \ref{fig: aerial manipulator with fully actuated multirotor} illustrates an aerial manipulator configured with a fully actuated multirotor with six tilted motors and a rigidly attached robotic arm. Also, two key coordinate frames, the Earth-fixed frame $\mathscr{F}_W$ and the multirotor frame $\mathscr{F}_b$, are displayed.  

The generalized coordinate of the aerial manipulator, $\boldsymbol{q} \triangleq [\boldsymbol{p};\boldsymbol{\phi}]$, consists of the position of $\mathscr{F}_b$ w.r.t. $\mathscr{F}_W$ expressed in $\mathscr{F}_W$, $\boldsymbol{p} \in {\mathbb{R}}^{3}$, and ZYX Euler angles, $\boldsymbol{\phi} \in {\mathbb{R}}^{3}$. Then, the Euler-Lagrange model of the aerial manipulator is formulated as follows \cite{lee2024saturated}: 
\begin{equation} \label{eqn: dynamic model of the aerial manipulator}
    \begin{split}
       \boldsymbol{M}(\boldsymbol{\phi})\boldsymbol{\ddot{q}} + \boldsymbol{C}(\boldsymbol{\phi},\boldsymbol{\dot{\phi}}) + \boldsymbol{G}  =  \boldsymbol{\tau} + \boldsymbol{\tau_{ext}}
    \end{split}
\end{equation}
where $\boldsymbol{\tau} \in {\mathbb{R}}^6$ and $\boldsymbol{\tau_{ext}} \in {\mathbb{R}}^6$ represent the generalized control wrench and external disturbance, respectively, and 
\begin{equation*}
    \begin{split}
        \boldsymbol{M}(\boldsymbol{\phi}) \triangleq& \begin{bmatrix}
            m\boldsymbol{I_3} & \boldsymbol{0_{3\times3}} \\
            \boldsymbol{0_{3\times3}} & \boldsymbol{Q^{\top}}\boldsymbol{J}\boldsymbol{Q}
        \end{bmatrix}, \\
        \boldsymbol{C} (\boldsymbol{\phi},\boldsymbol{\dot{\phi}}) \triangleq& \begin{bmatrix}
            \boldsymbol{0_{3\times1}} \\
            \boldsymbol{Q^{\top}}(\boldsymbol{J}\boldsymbol{\dot{Q}}\boldsymbol{\dot{\phi}} + [\boldsymbol{Q}\boldsymbol{\dot{\phi}}]_{\times}\boldsymbol{J}\boldsymbol{Q}\boldsymbol{\dot{\phi}})
        \end{bmatrix}, \ \boldsymbol{G} \triangleq \begin{bmatrix}
            mg\boldsymbol{e_3} \\ 
            \boldsymbol{0_{3\times1}}
        \end{bmatrix}
    \end{split}
\end{equation*}
with the mass and moment of inertia of the aerial manipulator, $m$ and $\boldsymbol{J} \in {\mathbb{R}}^{3\times3}$, and the gravitational acceleration, $g$. Also, we let $\boldsymbol{Q} \in {\mathbb{R}}^{3\times3}$ denote the mapping matrix satisfying $\boldsymbol{\omega} = \boldsymbol{Q}\boldsymbol{\dot{\phi}}$ where $\boldsymbol{\omega} \in {\mathbb{R}}^{3}$ represents the angular velocity of the vehicle w.r.t. $\mathscr{F}_{W}$ expressed in $\mathscr{F}_{b}$. Meanwhile, the relationship between $\boldsymbol{\tau}$ and $\boldsymbol{T} \triangleq [T_1;\cdots;T_6]$ is derived as follows:
\begin{equation} \label{eqn: relation between tau and T}
    \begin{split}
        \boldsymbol{\tau} = \boldsymbol{B}(\boldsymbol{\phi})\boldsymbol{\Xi}\boldsymbol{T}
    \end{split}
\end{equation}
where $\boldsymbol{B}(\boldsymbol{\phi}) \triangleq \textrm{blkdiag}\{\boldsymbol{R}, \boldsymbol{Q^{\top}}\}$ and 
\begin{equation} \label{eqn: definition of Xi}
    \begin{split}
        \boldsymbol{\Xi} \triangleq& \begin{bmatrix}
            \tfrac{1}{2}s\alpha & -s\alpha & \tfrac{1}{2}s\alpha & \tfrac{1}{2}s\alpha & -s\alpha & \tfrac{1}{2}s\alpha \\ 
             -\tfrac{\sqrt{3}}{2}s\alpha & 0 & \tfrac{\sqrt{3}}{2}s\alpha &  -\tfrac{\sqrt{3}}{2}s\alpha & 0 & \tfrac{\sqrt{3}}{2}s\alpha \\ 
             c\alpha & c\alpha & c\alpha & c\alpha & c\alpha & c\alpha \\ 
               -\tfrac{1}{2}P_1 & -P_1 & -\tfrac{1}{2}P_1 & \tfrac{1}{2}P_1 & P_1 & \tfrac{1}{2}P_1 \\ 
               \tfrac{\sqrt{3}}{2}P_1 & 0 & -\tfrac{\sqrt{3}}{2}P_1 & -\tfrac{\sqrt{3}}{2}P_1 & 0 & \tfrac{\sqrt{3}}{2}P_1 \\ 
               P_2 & -P_2 & P_2 & -P_2 & P_2 & -P_2
        \end{bmatrix}.
    \end{split}
\end{equation}
with $P_1 \triangleq Lc\alpha + k_fs\alpha$ and $P_2 \triangleq Ls\alpha - k_fc\alpha$. In (\ref{eqn: definition of Xi}), $L$, $\alpha$ and $k_f$ represent the length from the vehicle's origin to each propeller, fixed tilt angle of each motor and thrust-to-torque coefficient, respectively. 

To address the effect of model uncertainties, we rearrange (\ref{eqn: dynamic model of the aerial manipulator}) as follows:
\begin{equation} \label{eqn: rearranged dynamic model of the aerial manipulator}
    \begin{split}
       \boldsymbol{\hat{M}}(\boldsymbol{\phi})\boldsymbol{\ddot{q}} + \boldsymbol{\hat{C}}(\boldsymbol{\phi},\boldsymbol{\dot{\phi}}) + \boldsymbol{\hat{G}} = \boldsymbol{\tau} + \boldsymbol{d}
    \end{split}
\end{equation}
where $\boldsymbol{\hat{M}}(\boldsymbol{\phi})$, $\boldsymbol{\hat{C}}(\boldsymbol{\phi},\boldsymbol{\dot{\phi}})$ and $\boldsymbol{\hat{G}}$ are the nominal values of $\boldsymbol{M}(\boldsymbol{\phi})$, $\boldsymbol{C}(\boldsymbol{\phi},\boldsymbol{\dot{\phi}})$ and $\boldsymbol{G}$, respectively, and the lumped disturbance $\boldsymbol{d} \in {\mathbb{R}}^6$ satisfies: 
\begin{multline*}
    \boldsymbol{d} \triangleq (\boldsymbol{\hat{M}}(\boldsymbol{\phi})-\boldsymbol{M}(\boldsymbol{\phi}))\boldsymbol{\ddot{q}} + \boldsymbol{\hat{C}}(\boldsymbol{\phi},\boldsymbol{\dot{\phi}}) - \boldsymbol{C}(\boldsymbol{\phi},\boldsymbol{\dot{\phi}}) \\+ \boldsymbol{\hat{G}} - \boldsymbol{G} + \boldsymbol{\tau_{ext}}.
\end{multline*}

\section{Control Framework} \label{section: control framework}

A DOB-based controller introduced in \cite{ha2018disturbance} makes the state of a robotic system follow its desired trajectory within a small bound for all $t \geq t_0$ even under the effect of time-varying external disturbances and model uncertainties. Hence, we adopt this control structure to conduct APhI. 
However, if it is not dynamically feasible to reach the vehicle's target pose, e.g., the target position is inside the wall while exerting on a static wall, the aerial system may become unstable since one of the motors cannot generate the commanded thrust value due to its limit. 
Thus, we need a safety filter that modifies the desired pose trajectory based on the system model (\ref{eqn: rearranged dynamic model of the aerial manipulator}), the DOB's structure and the feasible range of the motor thrust.

\begin{figure}[t]
\centering
\vspace{0.25cm}
\includegraphics[width = 0.35\textwidth]{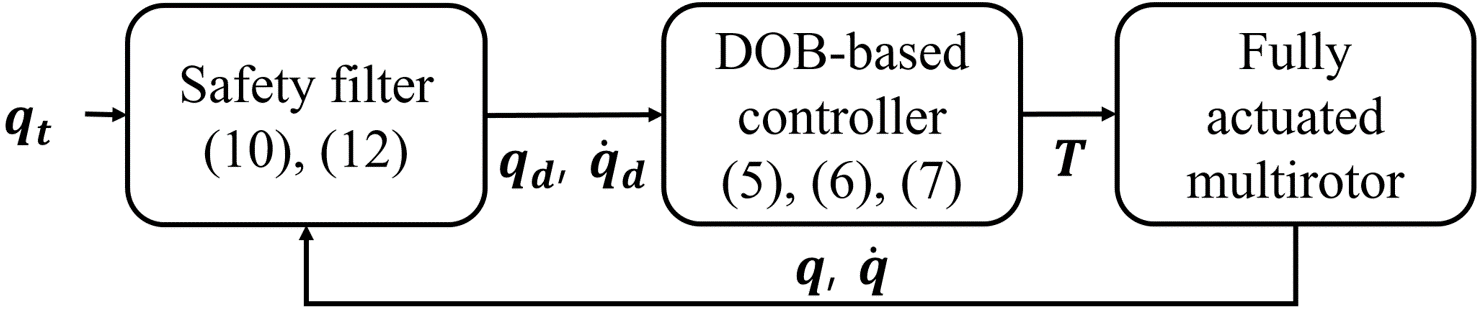}
\vspace{-0.33cm}
\caption{Overall controller diagram for safe APhI} \label{fig: overall controller diagram for safe APhI}
\end{figure}

Fig. \ref{fig: overall controller diagram for safe APhI} presents the overall controller diagram for safe APhI. Based on the target pose set by an external planner (e.g., joystick or ground computer station), $\boldsymbol{q_{t}} \in {\mathbb{R}}^6$, the desired pose and twist, $\boldsymbol{q_d} \in {\mathbb{R}}^{6}$ and $\boldsymbol{\dot{q}_d} \in {\mathbb{R}}^{6}$, are calculated by the safety filter. Then, the DOB-based control law generates $\boldsymbol{T}$.

\subsection{DOB-Based Control Law}

According to \cite{ha2018disturbance}, the estimation of the lumped disturbance, $\boldsymbol{\hat{d}} \in {\mathbb{R}}^6$, is formulated as follows: 
\begin{equation} \label{eqn: disturbance observer}
    \begin{split}
        \boldsymbol{\hat{d}} =& -\boldsymbol{\hat{M}}(\boldsymbol{\phi})(\boldsymbol{\mu^{-1}}\boldsymbol{\Gamma_{\zeta}}(\boldsymbol{\zeta} - \boldsymbol{\dot{q}}) + \boldsymbol{\chi}) + \boldsymbol{\hat{C}}(\boldsymbol{\phi},\boldsymbol{\dot{\phi}}) + \boldsymbol{\hat{G}} \\
        \boldsymbol{\dot{\zeta}} =& -\boldsymbol{\mu^{-1}}\boldsymbol{\Gamma_{\zeta}}(\boldsymbol{\zeta} - \boldsymbol{\dot{q}}) \\
        \boldsymbol{\dot{\chi}} =& -\boldsymbol{\mu^{-1}}\boldsymbol{\Gamma_{\chi}}(\boldsymbol{\chi} - \boldsymbol{\hat{M}^{-1}}(\boldsymbol{\phi})\boldsymbol{\tau})
    \end{split}
\end{equation}
where $\boldsymbol{\zeta} \in {\mathbb{R}}^{6}$ and $\boldsymbol{\chi} \in {\mathbb{R}}^{6}$ represent the filtered values of $\boldsymbol{\dot{q}}$ and $\boldsymbol{\hat{M}^{-1}}(\boldsymbol{\phi})\boldsymbol{\tau}$, respectively, with constant positive definite matrices $\boldsymbol{\Gamma_{\zeta}}, \boldsymbol{\Gamma_{\chi}} \in {\mathbb{R}}^{6\times6}_{>0}$ and a matrix $\boldsymbol{\mu} \triangleq \textrm{diag}\{\mu_1,\cdots,\mu_6\}$ configured with positive constants lower than one. Then, based on $\boldsymbol{q_d}$ and $\boldsymbol{\dot{q}_d}$ acquired from the safety filter, the control law for $\boldsymbol{\tau}$ is designed as follows:
\begin{multline} \label{eqn: DOB-based control law}
        \boldsymbol{\tau}(\boldsymbol{x}) = \boldsymbol{\hat{M}}(\boldsymbol{\phi})(\boldsymbol{K_d}\boldsymbol{\dot{e}} + \boldsymbol{K_p}\boldsymbol{e}) + \boldsymbol{\hat{C}}(\boldsymbol{\phi},\boldsymbol{\dot{\phi}}) + \boldsymbol{\hat{G}} - \boldsymbol{\hat{d}}
\end{multline}
where $\boldsymbol{e} \triangleq \boldsymbol{q_d} - \boldsymbol{q}$ and $\boldsymbol{x} \triangleq [\boldsymbol{q};\boldsymbol{\dot{q}};\boldsymbol{\zeta};\boldsymbol{\chi};\boldsymbol{q_d};\boldsymbol{\dot{q}_d}]$ with controller gains $\boldsymbol{K_p} \in {\mathbb{R}}^{6\times6}_{>0}$ and $\boldsymbol{K_d} \in {\mathbb{R}}^{6\times6}_{>0}$. By (\ref{eqn: relation between tau and T}), $\boldsymbol{T}(\boldsymbol{x})$ is calculated as follows:
\begin{equation} \label{eqn: conversion from control wrench to motor thrust vector}
    \begin{split}
        \boldsymbol{T}(\boldsymbol{x}) =& \boldsymbol{\Xi^{-1}}\boldsymbol{B^{-1}}(\boldsymbol{\phi})\boldsymbol{\tau}(\boldsymbol{x}).
    \end{split}
\end{equation}

\subsection{Safety Filter}

In the safety filter, $\boldsymbol{q_d}$ and $\boldsymbol{\dot{q}_d}$ is adjusted based on (\ref{eqn: rearranged dynamic model of the aerial manipulator}) - (\ref{eqn: DOB-based control law}) and the limit of motor thrust. To this end, we need to formulate a control-affine system with $\boldsymbol{x}$ as a state and the vehicle's acclereation $\boldsymbol{\ddot{q}_d} \in {\mathbb{R}}^{6}$ as an input.

By substituting (\ref{eqn: disturbance observer}) and (\ref{eqn: DOB-based control law}) for (\ref{eqn: rearranged dynamic model of the aerial manipulator}), we obtain the system w.r.t. $\boldsymbol{x}$ as follows:
\begin{equation} \label{eqn: general system formulation}
    \begin{split}
        \boldsymbol{\dot{x}} =& \boldsymbol{f}(\boldsymbol{x}) + \boldsymbol{g}\boldsymbol{\ddot{q}_d} + \boldsymbol{\rho}(\boldsymbol{x},\boldsymbol{\tilde{d}})
    \end{split}
\end{equation}
where $\boldsymbol{g} $$ = $$ [\boldsymbol{0_{30\times6}} ; \boldsymbol{I_6}]$,  $\boldsymbol{\rho}(\boldsymbol{x},\boldsymbol{\tilde{d}})$$ =$$ [\boldsymbol{0_{6 \times 1}};-\boldsymbol{\hat{M}^{-1}}(\boldsymbol{\phi})\boldsymbol{\tilde{d}};\boldsymbol{0_{24 \times 1}}]$ and
\begin{equation*}
    \begin{split}
        \boldsymbol{f}(\boldsymbol{x}) =& \begin{bmatrix}
            \boldsymbol{\dot{q}} \\
            \boldsymbol{K_d}\boldsymbol{\dot{e}} + \boldsymbol{K_p}\boldsymbol{e} \\
            -\boldsymbol{\mu^{-1}}\boldsymbol{\Gamma_{\zeta}}(\boldsymbol{\zeta} - \boldsymbol{\dot{q}}) \\
            \boldsymbol{\mu^{-1}}\boldsymbol{\Gamma_{\chi}}(\boldsymbol{K_d}\boldsymbol{\dot{e}} + \boldsymbol{K_p}\boldsymbol{e} + \boldsymbol{\mu^{-1}}\boldsymbol{\Gamma_{\zeta}}(\boldsymbol{\zeta} - \boldsymbol{\dot{q}})) \\
            \boldsymbol{\dot{q}_d} \\
            \boldsymbol{0_{6\times1}}
        \end{bmatrix}.
    \end{split}
\end{equation*}

Our main objective is to make $\boldsymbol{x}$ stay inside its safety set constructed as:
\begin{equation*}
    \mathcal{C}_T \triangleq \{\boldsymbol{x} \in {\mathbb{R}}^{36} \ | \ T_m \leq T_{i}(\boldsymbol{x}) \leq T_M, \ i = 1,\cdots,6 \}
\end{equation*}
where $T_m$ and $T_M$ are the minimum and maximum values of each motor's thrust. Hence, we first define six barrier functions $h_{T,1}(\boldsymbol{x}), \cdots, h_{T,6}(\boldsymbol{x})$ as follows:
\begin{equation} \label{eqn: barrier functions for the input saturation}
    \begin{split}
        h_{T,i}(\boldsymbol{x}) \triangleq& (\tfrac{T_M - T_m}{2})^2 - (T_i(\boldsymbol{x}) - \tfrac{T_M + T_m}{2})^2
    \end{split}
\end{equation}
so that $\mathcal{C}_T = \mathcal{C}_{T,1} \cap \cdots \cap \mathcal{C}_{T,6}$ with $\mathcal{C}_{T,i} \triangleq \{\boldsymbol{x} \in {\mathbb{R}}^{36} \ | \ h_{T,i}(\boldsymbol{x}) \geq 0 \}$. To ensure that $T_i(\boldsymbol{x})$ does not violate its limit for all $t \geq t_0$, the desired acceleration of the vehicle, $\boldsymbol{\ddot{q}_d}$, has to satisfy the following inequality:
\begin{multline*} 
    0 \leq \dot{h}_{T,i}(\boldsymbol{x},\boldsymbol{\dot{x}}) + \gamma_{T,i}h_{T,i}(\boldsymbol{x}), \ {}^{\forall} (x, i) \in \mathcal{D} \times \{1,\cdots,6\}
\end{multline*}
where $\gamma_{T,i}$ is a user-defined constant parameter and $\mathcal{D} \subset {\mathbb{R}}^{36}$ means the domain of $\boldsymbol{x}$. 

\subsubsection{Compensation for DOB Error}

According to (\ref{eqn: general system formulation}), $\dot{h}_{T,i}(\boldsymbol{x},\boldsymbol{\dot{x}})$ is calculated as $\mathcal{L}_{\boldsymbol{f}}h_{T,i}(\boldsymbol{x}) + \mathcal{L}_{\boldsymbol{g}}h_{T_i}(\boldsymbol{x})\boldsymbol{\ddot{q}_d} + \tfrac{\partial h_{T,i} (\boldsymbol{x})}{\partial \boldsymbol{x}}\boldsymbol{\rho}(\boldsymbol{x},\boldsymbol{\tilde{d}})$ and consists of an unknown term $\beta_{T,i}(\boldsymbol{x},\boldsymbol{\tilde{d}}) \triangleq \tfrac{\partial h_{T,i} (\boldsymbol{x})}{\partial \boldsymbol{x}}\boldsymbol{\rho}(\boldsymbol{x},\boldsymbol{\tilde{d}})$. Therefore, we estimate this term by utilizing the method introduced in \cite{alan2022disturbance} as follows:
\begin{equation} \label{eqn: unknown CBF estimation}
    \begin{split}
        \hat{\beta}_{T,i}(\boldsymbol{x},\xi_{T,i}) = k_{\beta,T,i}h_{T,i}(\boldsymbol{x}) - \xi_{T,i}   \\
        \dot{\xi}_{T,i} = k_{\beta,T,i}\{\mathcal{L}_{\boldsymbol{f}}h_{T,i}(\boldsymbol{x}) + \mathcal{L}_{\boldsymbol{g}}h_{T,i}(\boldsymbol{x})\boldsymbol{\ddot{q}_d} + \hat{\beta}_{T,i}(\boldsymbol{x},\xi_{T,i})\}
    \end{split}
\end{equation}
with a positive constant parameter $k_{\beta,T,i} > \gamma_{T,i}$. {According to \cite{alan2022disturbance}, (\ref{eqn: unknown CBF estimation}) guarantees the following inequality:
\begin{equation} \label{eqn: equation on e_T}
    |e_{T,i}(t)| \leq (|e_{T,i}(t_0)|-\tfrac{\beta_{T,h,i}}{k_{\beta,T,i}})e^{-k_{\beta,T,i}(t - t_0)} + \tfrac{\beta_{T,h,i}}{k_{\beta,T,i}}
\end{equation}
where $e_{T,i} \triangleq \beta_{T,i}(\boldsymbol{x},\boldsymbol{\tilde{d}}) - \hat{\beta}_{T,i}(\boldsymbol{x},\xi_{T,i})$ and $\beta_{T,h,i} \geq |\tfrac{d}{dt}\beta_{T,i}(\boldsymbol{x},\boldsymbol{\tilde{d}})|$ } 

\subsubsection{Quadratic Programming (QP) Problem on Solving Desired Acceleration}

Since there exists a difference between $\beta_{T,i}(\boldsymbol{x},\boldsymbol{\tilde{d}})$ and $\hat{\beta}_{T,i}(\boldsymbol{x},\xi_{T,i})$, we define a more conservative inequality on $\boldsymbol{\ddot{q}_d}$ as follows: 
\begin{multline*}
    \sigma_{T,i} \leq \mathcal{L}_{\boldsymbol{f}}h_{T,i}(\boldsymbol{x}) + \mathcal{L}_{\boldsymbol{g}}h_{T_i}(\boldsymbol{x})\boldsymbol{\ddot{q}_d} \\+ \hat{\beta}_{T,i}(\boldsymbol{x},\xi_{T,i}) + \gamma_{T,i}h_{T,i}(\boldsymbol{x})
\end{multline*}
with the positive scalars $\sigma_{T,1}, \cdots, \sigma_{T,6}$.According to \cite{alan2022disturbance}, if $\sigma_{T,i} \geq \textrm{max}\{|e_{T,i}(t_0)|,\tfrac{\beta_{T,h,i}}{k_{\beta,T,i}}\}$, then $0 \leq \dot{h}_{T,i}(\boldsymbol{x},\boldsymbol{\dot{x}}) + \gamma_{T,i}h_{T,i}(\boldsymbol{x})$. Also, since the boundedness of $\boldsymbol{\tilde{d}}$ and $\boldsymbol{\dot{\tilde{d}}}$ is proven in \cite{ha2018disturbance} and $|e_{T,i}(t_0)|$ is also bounded, we can find a large number $\sigma_{T,i}$ such that $\sigma_{T,i} \geq \textrm{max}\{|e_{T,i}(t_0)|,\tfrac{\beta_{T,h,i}}{k_{\beta,T,i}}\}, \ {}^{\forall} i \in \{1,\cdots,6\}$. 

Accordingly, the quadratic programming (QP) problem with the decision variable $\boldsymbol{\ddot{q}_d}$ is formulated as follows:
\begin{equation} \label{eqn: quadratic programming}
    \begin{split}
        \underset{\boldsymbol{\ddot{q}_d}}{\textrm{min}}  \|\boldsymbol{\ddot{q}_d} - \boldsymbol{\ddot{q}_{t}}\|^2 \textrm{ s.t. } \boldsymbol{A_T}\boldsymbol{\ddot{q}_d} + \boldsymbol{\sigma_T} \leq \boldsymbol{\hat{b}_T}
    \end{split}
\end{equation}
where $\boldsymbol{A_T} \triangleq -[\mathcal{L}_{\boldsymbol{g}}h_{T,1}(\boldsymbol{x});\mathcal{L}_{\boldsymbol{g}}h_{T,2}(\boldsymbol{x});\cdots; \mathcal{L}_{\boldsymbol{g}}h_{T,6}(\boldsymbol{x})]$, $\boldsymbol{\sigma_T} \triangleq [\sigma_{T,1};\cdots;\sigma_{T,6}]$ and 
\begin{equation*}
    \begin{split}
        \boldsymbol{\hat{b}_T} \triangleq& \begin{bmatrix}
            \gamma_{T,1}(h(\boldsymbol{x})) + \mathcal{L}_{\boldsymbol{f}}h_{T,1}(\boldsymbol{x}) + \hat{\beta}_{T,1}(\boldsymbol{x},\xi_{T,1}) \\
            \vdots \\
            \gamma_{T,6}(h(\boldsymbol{x})) + \mathcal{L}_{\boldsymbol{f}}h_{T,6}(\boldsymbol{x}) + \hat{\beta}_{T,6}(\boldsymbol{x},\xi_{T,6})
        \end{bmatrix}. 
    \end{split}
\end{equation*}
The target acceleration $\boldsymbol{\ddot{q}_t} \in {\mathbb{R}}^{6}$ will be presented in the next section, and the detailed expression of Lie derivatives $\mathcal{L}_{\boldsymbol{f}}h_{T,i}(\boldsymbol{x})$ and $\mathcal{L}_{\boldsymbol{g}}h_{T,i}(\boldsymbol{x})$ can be found in Appendix.

\subsubsection{Target Acceleration Calculation}

The target acceleration, $\boldsymbol{\ddot{q}_t} \in {\mathbb{R}}^6$, is calculated as follows:
\begin{equation}
    \begin{split}
        \ddot{q}_{t,i} =& -2k_{a,i}\delta_{v,i}\dot{q}_{d,i} - k^2_{a,i}(q_{t,i} - q_{d,i}), \ i = 1, \cdots, 6
    \end{split}
\end{equation}
where $k_{a,i}$ is a positive scalar and
\begin{equation} \label{eqn: damping ratio for the target acceleration}
    \delta_{v,i} = \delta_{v,m} + \tfrac{k_{\Delta p}|q_{d,i} - q_{t,i}|}{1 + k_{\Delta p}|q_{d,i} - q_{t,i}|}(\delta_{v,M} - \delta_{v,m})    
\end{equation}
with the minimum and maximum damping ratios, $\delta_{v,m}$ and $\delta_{v,M}$, and the proportional gain to the target-desired pose difference, $k_{\Delta p}$. (\ref{eqn: damping ratio for the target acceleration}) indicates that the magnitude of the desired twist can be reduced by increasing the damping ratio if the difference between the desired and target poses gets larger. It prevents excessive oscillation of $\boldsymbol{q_d}$ and $\boldsymbol{\dot{q}_d}$.

\section{Theoretical Analysis}

\begin{theorem} \label{thm: h_T is a multivariable CBF}
    Let $\boldsymbol{h_T}(\boldsymbol{x})$ denote $[h_{T,1}(\boldsymbol{x});$$\cdots;$$h_{T,1}(\boldsymbol{x})]$, then the safety set $\mathcal{C}_T$ is forward invariant with (\ref{eqn: rearranged dynamic model of the aerial manipulator}), (\ref{eqn: disturbance observer}), (\ref{eqn: DOB-based control law}), (\ref{eqn: unknown CBF estimation}) and (\ref{eqn: quadratic programming}).
\end{theorem}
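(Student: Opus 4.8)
The plan is to establish forward invariance of each set $\mathcal{C}_{T,i} = \{\boldsymbol{x} \mid h_{T,i}(\boldsymbol{x}) \geq 0\}$ individually and then intersect, since $\mathcal{C}_T = \mathcal{C}_{T,1} \cap \cdots \cap \mathcal{C}_{T,6}$. For a single barrier, forward invariance of $\mathcal{C}_{T,i}$ reduces to the comparison lemma: if the zeroing-barrier inequality $\dot{h}_{T,i}(\boldsymbol{x},\boldsymbol{\dot{x}}) + \gamma_{T,i}h_{T,i}(\boldsymbol{x}) \geq 0$ holds along the closed-loop trajectory for all $t \geq t_0$, then integrating $\dot{h}_{T,i} \geq -\gamma_{T,i}h_{T,i}$ gives $h_{T,i}(\boldsymbol{x}(t)) \geq h_{T,i}(\boldsymbol{x}(t_0))e^{-\gamma_{T,i}(t-t_0)} \geq 0$ whenever $h_{T,i}(\boldsymbol{x}(t_0)) \geq 0$. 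Hence the whole proof reduces to verifying this inequality holds under the closed loop assembled from (\ref{eqn: rearranged dynamic model of the aerial manipulator}), (\ref{eqn: disturbance observer}), (\ref{eqn: DOB-based control law}), (\ref{eqn: unknown CBF estimation}) and (\ref{eqn: quadratic programming}).

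The core step is to translate the conservative constraint enforced by the QP into the true barrier inequality. Differentiating $h_{T,i}$ along (\ref{eqn: general system formulation}) gives $\dot{h}_{T,i} = \mathcal{L}_{\boldsymbol{f}}h_{T,i}(\boldsymbol{x}) + \mathcal{L}_{\boldsymbol{g}}h_{T,i}(\boldsymbol{x})\boldsymbol{\ddot{q}_d} + \beta_{T,i}(\boldsymbol{x},\boldsymbol{\tilde{d}})$, where $\beta_{T,i}$ is the sole term carrying the unknown DOB error $\boldsymbol{\tilde{d}}$. By construction of $\boldsymbol{A_T}$ and $\boldsymbol{\hat{b}_T}$, any feasible minimizer $\boldsymbol{\ddot{q}_d}$ of (\ref{eqn: quadratic programming}) satisfies, for each $i$, the row inequality $\sigma_{T,i} \leq \mathcal{L}_{\boldsymbol{f}}h_{T,i}(\boldsymbol{x}) + \mathcal{L}_{\boldsymbol{g}}h_{T,i}(\boldsymbol{x})\boldsymbol{\ddot{q}_d} + \hat{\beta}_{T,i}(\boldsymbol{x},\xi_{T,i}) + \gamma_{T,i}h_{T,i}(\boldsymbol{x})$. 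Writing $\beta_{T,i} = \hat{\beta}_{T,i} + e_{T,i}$ and substituting this constraint into $\dot{h}_{T,i} + \gamma_{T,i}h_{T,i}$ yields the chain $\dot{h}_{T,i} + \gamma_{T,i}h_{T,i} \geq \sigma_{T,i} + e_{T,i} \geq \sigma_{T,i} - |e_{T,i}|$, so it suffices to show $|e_{T,i}(t)| \leq \sigma_{T,i}$ for all $t \geq t_0$.

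This last bound follows from the estimator-error estimate (\ref{eqn: equation on e_T}). A two-case analysis of its right-hand side shows it is dominated, uniformly in $t$, by $\max\{|e_{T,i}(t_0)|, \beta_{T,h,i}/k_{\beta,T,i}\}$: when $|e_{T,i}(t_0)| \geq \beta_{T,h,i}/k_{\beta,T,i}$ the nonnegative exponential coefficient decays and keeps the bound below $|e_{T,i}(t_0)|$, whereas when $|e_{T,i}(t_0)| < \beta_{T,h,i}/k_{\beta,T,i}$ the exponential term is negative and only lowers the bound below $\beta_{T,h,i}/k_{\beta,T,i}$. The design choice $\sigma_{T,i} \geq \max\{|e_{T,i}(t_0)|, \beta_{T,h,i}/k_{\beta,T,i}\}$ is realizable because $\boldsymbol{\tilde{d}}$ and $\boldsymbol{\dot{\tilde{d}}}$ are bounded by the DOB analysis in \cite{ha2018disturbance} and $|e_{T,i}(t_0)|$ is finite; it therefore gives $|e_{T,i}(t)| \leq \sigma_{T,i}$ and hence $\dot{h}_{T,i} + \gamma_{T,i}h_{T,i} \geq 0$ for every $i$ and all $t$. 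Combined with the comparison-lemma argument of the first paragraph, this yields forward invariance of each $\mathcal{C}_{T,i}$, and therefore of $\mathcal{C}_T$.

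The step I expect to be the main obstacle is the implicit presumption that (\ref{eqn: quadratic programming}) stays feasible along the trajectory, since the argument above assumes a $\boldsymbol{\ddot{q}_d}$ satisfying the six constraints exists. Feasibility rests on the structure of $\boldsymbol{A_T}$, whose rows are $-\mathcal{L}_{\boldsymbol{g}}h_{T,i}(\boldsymbol{x})$; from (\ref{eqn: barrier functions for the input saturation}) these are proportional to $(T_i(\boldsymbol{x}) - \tfrac{T_M+T_m}{2})$ times the $i$-th row of $\boldsymbol{\Xi^{-1}}\boldsymbol{B^{-1}}(\boldsymbol{\phi})\boldsymbol{\hat{M}}(\boldsymbol{\phi})\boldsymbol{K_d}$, so $\boldsymbol{A_T}$ factors as a diagonal matrix times this invertible product and loses rank exactly where some $T_i$ equals the midpoint $\tfrac{T_M+T_m}{2}$. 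That locus is where $h_{T,i}$ attains its maximum and the corresponding constraint is slack, so the degeneracy is benign, but a careful argument must confirm that each $h_{T,i}$ has relative degree one w.r.t. $\boldsymbol{\ddot{q}_d}$ and that the constraint set stays nonempty; one should also check that all Lie derivatives are well defined via continuity and differentiability of $T_i(\boldsymbol{x})$, which the explicit Appendix expressions supply.
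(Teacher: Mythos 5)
Your proposal is correct and follows essentially the same route as the paper's proof: you convert the conservative QP constraint into the true barrier inequality via the decomposition $\beta_{T,i} = \hat{\beta}_{T,i} + e_{T,i}$ together with the bound $\sigma_{T,i} \geq \max\{|e_{T,i}(t_0)|, \beta_{T,h,i}/k_{\beta,T,i}\}$, and you then address QP feasibility through the diagonal-times-invertible factorization of $\boldsymbol{A_T}$ and the benign degeneracy at $T_i(\boldsymbol{x}) = \tfrac{T_M+T_m}{2}$. The feasibility analysis you flag as the ``main obstacle'' is precisely the row-by-row argument the paper carries out (splitting indices into $\mathcal{N}_{T=0}$ and $\mathcal{N}_{T\neq0}$ and using the full rank of the reduced constraint matrix), so your sketch is the intended completion rather than a gap.
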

\begin{proof}
     From (\ref{eqn: quadratic programming}), the inequality $\boldsymbol{A_T}\boldsymbol{\ddot{q}_d} + \boldsymbol{\sigma_T} \leq \boldsymbol{\hat{b}_T}$ is rearranged as $\boldsymbol{A_T}\boldsymbol{\ddot{q}_d} \leq \boldsymbol{b_T} - \boldsymbol{e_T} - \boldsymbol{\sigma_T}$ where $\boldsymbol{e_T} \triangleq [e_{T,1};\cdots;e_{T,6}]$ and
    \begin{equation*}
        \begin{split}
            \boldsymbol{b_T} \triangleq& \begin{bmatrix}
                \gamma_{T,1}(h(\boldsymbol{x})) + \mathcal{L}_{\boldsymbol{f}}h_{T,1}(\boldsymbol{x}) + \beta_{T,1}(\boldsymbol{x},\boldsymbol{\tilde{d}}) \\
                \vdots \\
                \gamma_{T,6}(h(\boldsymbol{x})) + \mathcal{L}_{\boldsymbol{f}}h_{T,6}(\boldsymbol{x}) + \beta_{T,6}(\boldsymbol{x},\boldsymbol{\tilde{d}})
            \end{bmatrix}. 
        \end{split}
    \end{equation*}
    Since $\boldsymbol{b_T} - \boldsymbol{A_T}\boldsymbol{\ddot{q}_d} = \boldsymbol{\dot{h}_T}(\boldsymbol{x},\boldsymbol{\dot{x}}) + \boldsymbol{\Gamma_{T}}\boldsymbol{h_T}(\boldsymbol{x})$ where $\boldsymbol{\Gamma_T} \triangleq \textrm{diag}\{[\gamma_{T,1};\cdots;\gamma_{T,6}]\}$, $\boldsymbol{A_T}\boldsymbol{\ddot{q}_d} \leq \boldsymbol{b_T} - \boldsymbol{e_T} - \boldsymbol{\sigma_T}$ is reformulated as $\boldsymbol{e_T} + \boldsymbol{\sigma_T} \leq \boldsymbol{\dot{h}_T}(\boldsymbol{x},\boldsymbol{\dot{x}}) + \boldsymbol{\Gamma_T}\boldsymbol{h_T}$. Because $-|e_{T,i}| \leq e_{T,i} \leq |e_{T,i}|$, (\ref{eqn: equation on e_T}) is arranged as $e_{T,i}(t) \geq (\tfrac{\beta_{T,h,i}}{k_{\beta,T,i}}-|e_{T,i}(t_0)|)e^{-k_{\beta,T,i}(t - t_0)} - \tfrac{\beta_{T,h,i}}{k_{\beta,T,i}}$. Then, from the proof of \cite[Theorem 1]{alan2022disturbance}, we obtain $e_{T,i}(t) + \sigma_{T,i} \geq (\textrm{max}\{|e_{T,i}(t_0)|,\tfrac{\beta_{T,h,i}}{k_{\beta,T,i}}\} - |e_{T,i}(t_0)|)e^{-k_{\beta,T,i}(t - t_0)} + \sigma_{T,i} - \textrm{max}\{|e_{T,i}(t_0)|,\tfrac{\beta_{T,h,i}}{k_{\beta,T,i}}\}$. Since $\sigma_{T,i} \geq \textrm{max}\{|e_{T,i}(t_0)|,\tfrac{\beta_{T,h,i}}{k_{\beta,T,i}}\}$, we prove that $e_{T,i}(t) + \sigma_{T,i} \geq 0$. Therefore, the following statement is true for all $\boldsymbol{x} \in \mathcal{D}$ and for all $t > t_0$:
    \begin{equation} \label{eqn: statement between inequalities}
        \boldsymbol{A_T}\boldsymbol{\ddot{q}_d} + \boldsymbol{\sigma_T} \leq \boldsymbol{\hat{b}_T} \ \rightarrow \boldsymbol{0_{6\times1}} \leq \boldsymbol{\dot{h}_T}(\boldsymbol{x},\boldsymbol{\dot{x}}) + \boldsymbol{\Gamma_T}\boldsymbol{h_T}.
    \end{equation}
    
    According to Appendix, $\mathcal{L}_{\boldsymbol{g}}h_{T,i}(\boldsymbol{x})$ is $-2(T_i(\boldsymbol{x}) - \tfrac{T_M + T_m}{2})\boldsymbol{E^{\top}_i}\boldsymbol{J_a}(\boldsymbol{\phi})\boldsymbol{K_d}$ where $\boldsymbol{J_a}(\boldsymbol{\phi}) = \boldsymbol{\Xi^{-1}}\boldsymbol{B^{-1}}(\boldsymbol{\phi})\boldsymbol{\hat{M}}(\boldsymbol{\phi})$. Thus, $\boldsymbol{A_T}$ in (\ref{eqn: quadratic programming}) is calculated as $-2\textrm{diag}\Big\{\Big[T_1(\boldsymbol{x})$$ - \tfrac{T_M + T_m}{2}$$;\cdots$$;T_6(\boldsymbol{x}) $$- \tfrac{T_M + T_m}{2}\Big]\Big\}$$\boldsymbol{J_a}(\boldsymbol{\phi})\boldsymbol{K_d}$. Since $\boldsymbol{\Xi}$, $\boldsymbol{B}(\boldsymbol{\phi})$, $\boldsymbol{\hat{M}}(\boldsymbol{\phi})$ and $\boldsymbol{K_d}$ are all invertible matrices for all $\boldsymbol{\phi} \in [-\pi, \pi] \times (-\tfrac{\pi}{2},\tfrac{\pi}{2}) \times [-\pi,\pi]$, $\boldsymbol{J_a}(\boldsymbol{\phi})\boldsymbol{K_d}$ is also invertible. 

    Let $\mathcal{N}_{T=0} \subset \{1,\cdots,6\}$ denote the set of indices that $T_i = \tfrac{T_M + T_m}{2}$, then we can analyze $\boldsymbol{A_T}\boldsymbol{\ddot{q}_d} + \boldsymbol{\sigma_T} \leq \boldsymbol{\hat{b}_T}$ row-by-row as follows:
    \begin{itemize}
        \item If $i \in \mathcal{N}_{T=0}$, then $\tfrac{\partial h_{T,i}}{\partial \boldsymbol{x}}$ becomes $\boldsymbol{0_{1 \times 36}}$ since  $\tfrac{\partial h_{T,i}}{\partial \boldsymbol{x}} = -2(T_i(\boldsymbol{x}) - \tfrac{T_m + T_M}{2})\boldsymbol{E_i^{\top}}\tfrac{\partial \boldsymbol{T}}{\partial \boldsymbol{x}}$. Thus, $\dot{h}_{T,i}(\boldsymbol{x},\boldsymbol{\dot{x}})$ becomes zero so that $0 \leq \dot{h}_{T,i}(\boldsymbol{x},\boldsymbol{\dot{x}}) + \gamma_{T,i}h_{T,i}(\boldsymbol{x})$ holds.
        
        \item Let $i_{f,1},\cdots,i_{f,n_T}$ denote the elements of $\mathcal{N}_{T \neq 0} \triangleq \{1,\cdots,6\} - \mathcal{N}_{T=0}$ where $n_T$ represents the number of elements in $\mathcal{N}_{T \neq 0}$, then the inequality $\boldsymbol{A_T}\boldsymbol{\ddot{q}_d} + \boldsymbol{\sigma_T} \leq \boldsymbol{\hat{b}_T}$ is rearranged as follows:
        \begin{equation}
          \boldsymbol{A_{T,\mathcal{N}_{T\neq0}}} \boldsymbol{\ddot{q}_d} \leq \boldsymbol{b_{T,\mathcal{N}_{T\neq0}}}
        \end{equation}
        where
        \begin{equation*}
            \begin{split}
                \boldsymbol{A_{T,\mathcal{N}_{T\neq0}}} \triangleq&  -2\begin{bmatrix}
                (T_{i_{f,1}}(\boldsymbol{x}) - \tfrac{T_m + T_M}{2})\boldsymbol{E^{\top}_{i_{f,1}}} \\
                \vdots \\
                (T_{i_{f,N_T}}(\boldsymbol{x}) - \tfrac{T_m + T_M}{2})\boldsymbol{E^{\top}_{i_{f,N_T}}}
            \end{bmatrix} \\
            &\times \boldsymbol{J_a}(\boldsymbol{\phi})\boldsymbol{K_d} \\
            \boldsymbol{b_{T,\mathcal{N}_{T\neq0}}} \triangleq& [b_{T,i_{f,1}} - \sigma_{T,i_{f,1}}; \cdots ; b_{T,i_{f,N_T}} - \sigma_{T,i_{f,N_T}}].
            \end{split}
        \end{equation*}
        Since $\boldsymbol{J_a}(\boldsymbol{\phi})\boldsymbol{K_d}$ is invertible, the rank of $\boldsymbol{A_{T,\mathcal{N}_{T\neq0}}}$ is $N_T$ unless $N_T = 0$. Thus, there exists $\boldsymbol{\ddot{q}_d^{\ast}} \in {\mathbb{R}}^6$ such that $\boldsymbol{A_{T,\mathcal{N}_{T\neq0}}}\boldsymbol{\ddot{q}_d^{\ast}} = \boldsymbol{b_{T,\mathcal{N}_{T\neq0}}} - \boldsymbol{b_{T,r}}$ where all the elements of $\boldsymbol{b_{T,r}} \in {\mathbb{R}}^{N_T}$ are positive scalars.
    \end{itemize}

    From the above discussion, we prove that there exists $\boldsymbol{\ddot{q}_{d,sol}} \in {\mathbb{R}}^6$ satisfying $\boldsymbol{A_T}\boldsymbol{\ddot{q}_{d,sol}} + \boldsymbol{\sigma_T} \leq \boldsymbol{\hat{b}_T}$ for all $(\boldsymbol{x},t) \in \mathcal{D} \times [t_0,\infty)$. Hence, from (\ref{eqn: statement between inequalities}), we prove that $\boldsymbol{0_{6\times1}} \leq \boldsymbol{\dot{h}_T}(\boldsymbol{x},\boldsymbol{\dot{x}}) + \boldsymbol{\Gamma_T}\boldsymbol{h_T}$ holds so that $\mathcal{C}_T$ is rendered forward invariance with (\ref{eqn: rearranged dynamic model of the aerial manipulator}), (\ref{eqn: disturbance observer}), (\ref{eqn: DOB-based control law}), (\ref{eqn: unknown CBF estimation}) and (\ref{eqn: quadratic programming}).
\end{proof}

\section{Experimental Results}

\begin{figure}[t]
\centering
\vspace{0.25cm}
\includegraphics[width = 0.48\textwidth]{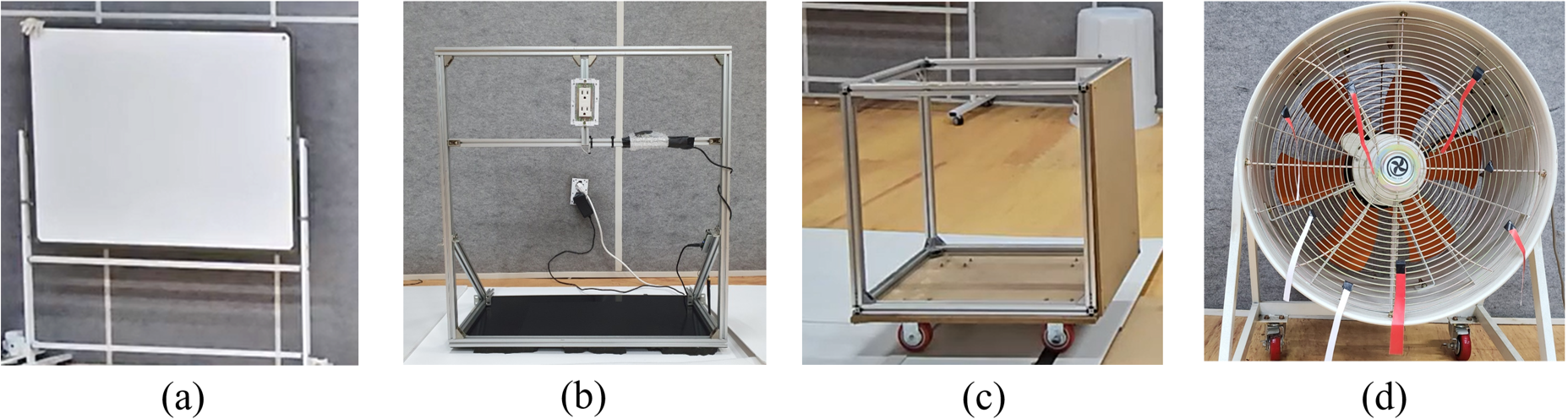}
\vspace{-0.40cm}
\caption{Equipment utilized in the actual experiments: (a) a vertical whiteboard, (b) a frame with a 110 V electric socket, (c)  a wheeled cart, and (d) an industrial fan.} \label{fig: experiment_environments}
\end{figure}

To validate the performance of the proposed controller in various types of APhI, we conduct actual experiments with the following four scenarios: 
\begin{enumerate}
    \item Pushing a static whiteboard (Fig. \ref{fig: thumbnail}(a)).
    \item Pulling a firmly attached plug (Fig. \ref{fig: thumbnail}(b)).
    \item Pushing a movable cart (Fig. \ref{fig: thumbnail}(c)).
    \item Pulling a plug out of a 110 V electric socket (Fig. \ref{fig: thumbnail}(b)).
\end{enumerate}
For scenarios 1) and 2), we compare the results from the proposed controller with two other existing controllers to show the enhanced control performance. For scenarios 3) and 4), we conduct repetitive experiments with the proposed controller to validate that our controller can repeatedly perform tasks involving sudden changes in dynamics. 

\subsection{Experimental Setups}

The aerial manipulator utilized for this work consists of two parts: a fully actuated hexacopter and a rigidly attached robotics arm. The fully actuated hexacopter which weighs 3.50 kg was assembled with the off-the-shelf frame DJI F550, six 9-inch APC LPB09045MR propellers, six KDE2314XF-965 motors with corresponding KDEXF-UAS35 electronic speed controllers (ESCs), six 3D printed thrust-tilting frames with $\alpha$ = $15 {}^{\circ}$, a 4S Turnigy Lipo battery to power up Intel NUC, a 6S Polytronics Lipo battery for the power supplement of the six ESCs, and Intel NUC for computing. On Intel NUC, Robot Operating System (ROS) noetic version is installed in Ubuntu 20.04, and the proposed control algorithm for the hexacopter and the navigation algorithm with Optitrack are executed. In the customized Pixhawk 4 connected to the Intel NUC, the rotational speeds of the six motors are controlled. 

For all the experiments, an industrial fan shown in Fig. \ref{fig: experiment_environments}(d) is used to generate a wind blast to imitate a more realistic situation, the parameter values are set as shown in Table \ref{table: common parameters}, and the target pose $\boldsymbol{q_t}$ is set from the human operator's laptop.


\begin{table}[h!]
\centering
\caption{Parameter values utilized in experiments.}
\vspace{-0.25cm}
\begin{tabular}{|c | c || c| c|} 
\hline
$\hat{m}$ & 3.50 & $T_m$ & 1.0 \\
\hline
$\hat{g}$ & 9.81 & $T_M$ & 15 \\
 \hline
 $\boldsymbol{\hat{J}}$ & {diag}\{[0.035;0.035;0.045]\} & $\gamma_{T,i = 1,\cdots,6}$ & 10 \\ 
 \hline
 $\boldsymbol{K_p}$ & {diag}\{[6.0,6.0,8.0,70,70,55]\} & $k_{\beta,T,i = 1,\cdots,6}$ & 10.1 \\
\hline
  $\boldsymbol{K_d}$ & {diag}\{[4.0,4.0,5.0,30,30,15]\} & $\sigma_{T,i = 1,\cdots,6}$ & 15 \\ 
 \hline
  $\boldsymbol{\Gamma_{\zeta}}$ & {diag}\{[1.0,1.0,1.0,0.10,0.10,0.50]\}  & $k_{a,i = 1,2,3,6}$ & 1.0 \\
 \hline
  $\boldsymbol{\Gamma_{\chi}}$ & {diag}\{[1.0,1.0,1.0,0.10,0.10,0.50]\}  & $k_{a,i = 1,2,3,6}$ & 5.0 \\
 \hline
 $\mu_{i=4,5}$ & 0.80 &  $\mu_{i=1,2,3,6}$  & 0.95 \\
 \hline 
\end{tabular}
\label{table: common parameters}
\end{table}

\subsection{Controller Comparison: Scenarios 1) and 2)}

For the scenarios 1) and 2), we compare the performance of the proposed method with two other baselines shown below:
\begin{itemize}
    \item First baseline: 
    Without a safety filter.  
    \item Second baseline: Direct adjustment of $\boldsymbol{\tau}$ based on $\boldsymbol{\tau_t}$ calculated as in (\ref{eqn: DOB-based control law}) instead of adjusting $\boldsymbol{q_d}$ and $\boldsymbol{\dot{q}_d}$.
    \item Proposed: The proposed safety filter.
\end{itemize}

\subsubsection{Pushing a Static Whiteboard}

For this scenario, we set $k_{\delta p}$, $\delta_{v,m}$ and $\delta_{v,M}$ as 0.5, 1.0 and 5.0, respectively. The human operator manually commanded the aerial manipulator to reach 0.3 m beyond the location of the whiteboard to exert pushing force on its surface.

\begin{figure}[t]
\centering
\vspace{0.25cm}
\includegraphics[width = 0.40\textwidth]{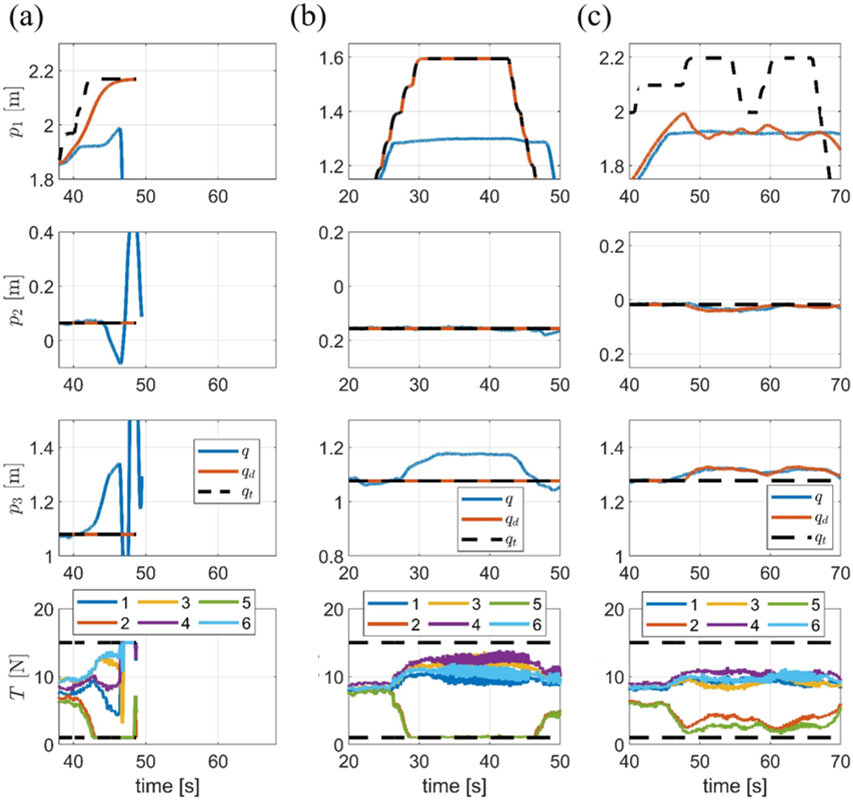}
\vspace{-0.30cm}
\caption{Histories of the vehicle's position ($\boldsymbol{p}$) and motor thrust values ($T_1,\cdots,T_6$) during the static whiteboard-pushing experiment with (a) the first baseline, (b) second baseline, and (c) the proposed method.} \label{fig: static pushing results}
\end{figure}

Fig. \ref{fig: static pushing results} presents the histories of the target, desired and actual position, and those of each motor's thrust. As observed in Fig. \ref{fig: static pushing results}(a), the vehicle failed to maintain its stability without any safety filter. Otherwise, in Figs. \ref{fig: static pushing results}(b) and \ref{fig: static pushing results}(c), we can notice that the vehicle successfully maintains its stability while pushing the whiteboard with the second baseline and proposed method. Even though there is no big difference between the results with the second baseline and the proposed method, we can note that $p_3$ shows slightly less overshoot with the proposed method.

\subsubsection{Pulling a Firmly Attached Plug}

For this scenario, we set $k_{\delta p}$, $\delta_{v,m}$ and $\delta_{v,M}$ as 5.0, 1.0, and 5.0, respectively. To make a pulling movement, the operator set the target pose to 0.2 m away from the socket.

\begin{figure}[t]
\centering
\vspace{0.25cm}
\includegraphics[width = 0.40\textwidth]{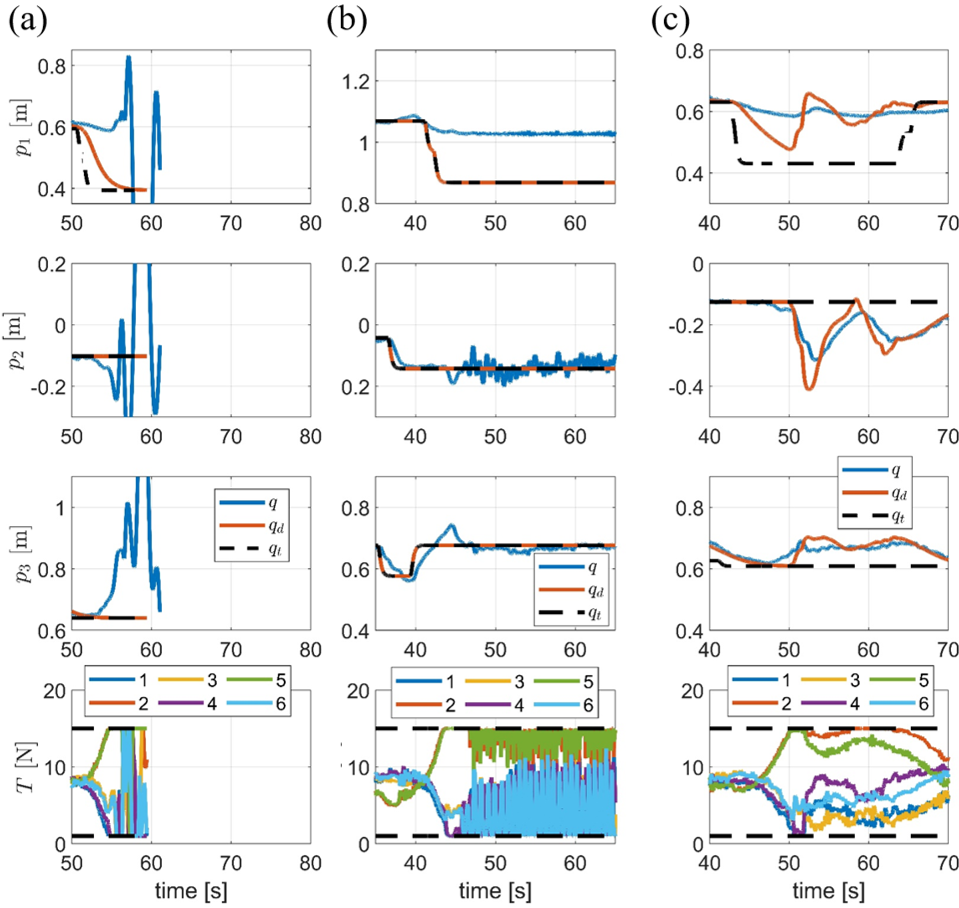}
\vspace{-0.30cm}
\caption{Histories of the vehicle's position ($\boldsymbol{p}$) and motor thrust values ($T_1,\cdots,T_6$) while pulling a firmly attached plug with (a) the first baseline, (b) second baseline, and (c) the proposed method.} \label{fig: static pulling results}
\end{figure}

Likewise in Fig. \ref{fig: static pushing results}(a), from Fig. \ref{fig: static pulling results}(a), we can observe that the aerial manipulator showed an oscillating behavior without any safety filter while pulling the plug firmly attached to the socket. On the other hand, Figs. \ref{fig: static pulling results}(b) and \ref{fig: static pulling results}(c) indicate that the second baseline and the proposed method lead to the vehicle's safe pulling. However, from the plot of $\boldsymbol{T}$ in Fig. \ref{fig: static pulling results}(b), we can note that there occurred a large oscillation in motor thrust. In the attached video, we can observe the large oscillation of roll and pitch angles due to the oscillating behavior of the motor thrust. 

\subsection{Validation on Tasks Involving Sudden Changes in Dynamics: : Scenarios 3) and 4)}

Through scenarios 3) and 4), we aim to emphasize the proposed method's repeatability in situations involving sudden changes in dynamics such as the sudden disappearance of the interaction wrench. Hence, we conducted five repetitive experiments for each task.

\subsubsection{Pushing a Movable Cart}

For this scenario, $k_{\delta p}$, $\delta_{v,m}$ and $\delta_{v,M}$ are set as 0.5, 1.0, and 5.0, respectively. We commanded the aerial manipulator to consistently push the cart (Fig. \ref{fig: experiment_environments}(c)) until its rear wheels go over the left black line shown in Fig. \ref{fig: thumbnail}(c).

\begin{figure}[t]
\centering
\includegraphics[width = 0.29\textwidth]{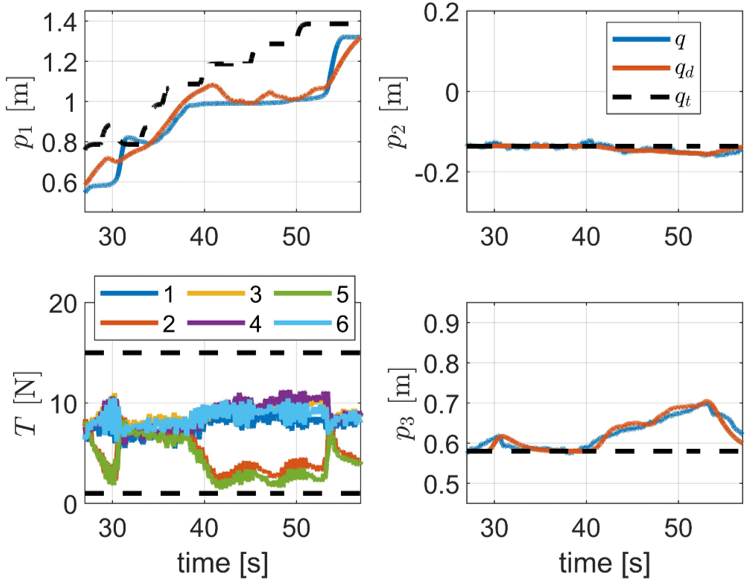}
\vspace{-0.35cm}
\caption{Histories of the vehicle's position ($\boldsymbol{p}$) and motor thrust values ($T_1,\cdots,T_6$) of the first cart-pushing experiment.} \label{fig: dynamic pushing result}
\end{figure}

Fig. \ref{fig: dynamic pushing result} reports the histories of the vehicle's position and motor thrust while conducting the first one among five cart-pushing experiments. As seen in the position plots, $\boldsymbol{p_{d}}$ actively modifies its value accordingly with the cart's movement.

\subsubsection{Pulling a Plug out of the Socket}

For this scenario, we put 5.0, 1.0, and 5.0 into $k_{\delta p}$, $\delta_{v,m}$ and $\delta_{v,M}$, respectively, and set $p_{t.1} = q_{t,1}$ as 0.2 m away from the grabbing position.

\begin{figure}[t]
\centering
\vspace{0.25cm}
\includegraphics[width = 0.29\textwidth]{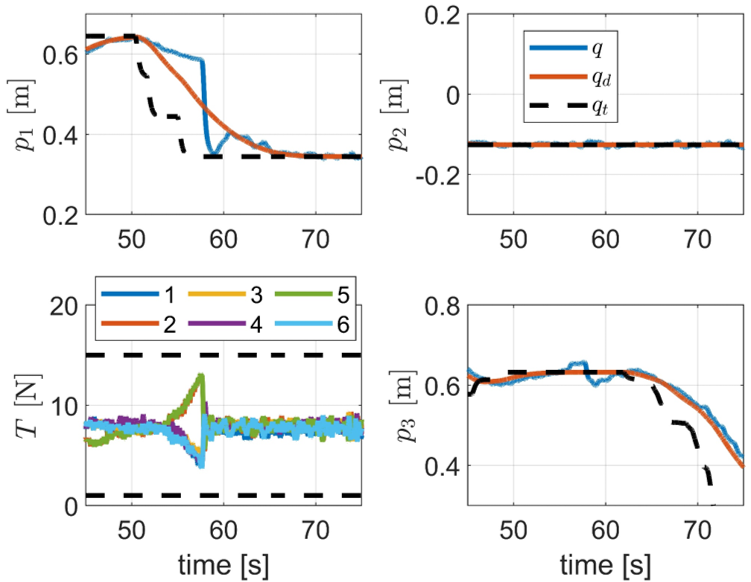}
\vspace{-0.35cm}
\caption{Histories of the vehicle's position ($\boldsymbol{p}$) and motor thrust values ($T_1,\cdots,T_6$) of the fourth plug-pulling experiment.} \label{fig: plug pulling result}
\end{figure}

In Fig. \ref{fig: plug pulling result}, the position and thrust histories of the fourth one among five plug-pulling experiments. In the left upper plot of Fig. \ref{fig: plug pulling result}, the history of $q_1$ shows that the plug is extracted around 57 seconds and the vehicle successfully stabilizes right after the plug separation.

\section{Conclusions}

This paper presented a safety filter based on a disturbance observer (DOB)-based control structure, designed to enhance pose-tracking performance while ensuring flight safety during physical interaction. First, we derived the Lagrange-Euler model for an aerial manipulator consisting of a fully actuated hexacopter and a rigidly attached robotic arm. Using this model, we developed a safety filter that accounted for motor thrust limits, external disturbances, and model uncertainties, integrated with the DOB-based control method.
We also proved that the proposed safety filter and controller ensured forward invariance of the safety set with respect to motor thrust limits. To validate the  performance of the proposed method  over existing control strategies for aerial physical interaction (APhI), we conducted comparative experiments. These included pushing a static structure and pulling a firmly attached object using the aerial manipulator.
Additionally, we performed repeated tests on tasks with sudden dynamic changes, such as pushing a movable cart and extracting a plug from a socket, to verify the controller’s ability to handle these dynamic shifts. Future work may involve designing a hybrid controller to better manage abrupt changes in dynamic models or extending the system for haptic-based teleoperation of an aerial manipulator interacting with its environment.

\vspace{-0.20cm}

\section*{Appendix}

The detailed expressions of the Lie derivatives are calculated as follows:
\begin{equation} \label{eqn: initial models of the Lie derivatives}
    \begin{split}
        \mathcal{L}_{\boldsymbol{f}}h_{T,i}(\boldsymbol{x}) =& -2(T_i(\boldsymbol{x}) - \tfrac{T_m + T_M}{2})\boldsymbol{E^{\top}_i}\tfrac{\partial \boldsymbol{T}}{\partial \boldsymbol{x}}(\boldsymbol{x})\boldsymbol{f}(\boldsymbol{x}), \\ \mathcal{L}_{\boldsymbol{g}}h_{T,i}(\boldsymbol{x}) =& -2(T_i(\boldsymbol{x}) - \tfrac{T_M + T_m}{2})\boldsymbol{E^{\top}_i}\boldsymbol{J_a}(\boldsymbol{\phi})\boldsymbol{K_d}
    \end{split}
\end{equation}
where
\begin{equation*} 
    \begin{split}
        \tfrac{\partial \boldsymbol{T}}{\partial \boldsymbol{x}}(\boldsymbol{x}) =& \Big[\tfrac{\partial \boldsymbol{T}}{\partial \boldsymbol{q}}(\boldsymbol{x}) \vdots -\boldsymbol{J_a}(\boldsymbol{\phi})(\boldsymbol{K_d} + \boldsymbol{\mu^{-1}}\boldsymbol{\Gamma_{\zeta}}) \vdots \\
        & \boldsymbol{J_a}(\boldsymbol{\phi})\boldsymbol{\mu^{-1}}\boldsymbol{\Gamma_{\zeta}} \vdots \boldsymbol{J_a}(\boldsymbol{\phi}) \vdots \boldsymbol{J_a}(\boldsymbol{\phi})\boldsymbol{K_p} \vdots \boldsymbol{J_a}(\boldsymbol{\phi})\boldsymbol{K_d} \Big] \\
        \tfrac{\partial \boldsymbol{T}}{\partial \boldsymbol{q}}(\boldsymbol{x}) =& -\boldsymbol{J_a}(\boldsymbol{\phi})\boldsymbol{K_p} - \Big[\boldsymbol{0_{3\times 3}} \vdots \tfrac{\partial \boldsymbol{J_a}(\boldsymbol{\phi})}{\partial \phi_1}\boldsymbol{J^{-1}_a}(\boldsymbol{\phi})\boldsymbol{T}(\boldsymbol{x}) \vdots \\ & \quad \ \tfrac{\partial \boldsymbol{J_a}(\boldsymbol{\phi})}{\partial \phi_2}\boldsymbol{J^{-1}_a}(\boldsymbol{\phi})\boldsymbol{T}(\boldsymbol{x}) \vdots \tfrac{\partial \boldsymbol{J_a}(\boldsymbol{\phi})}{\partial \phi_3}\boldsymbol{J^{-1}_a}(\boldsymbol{\phi})\boldsymbol{T}(\boldsymbol{x}) \Big].
    \end{split}
\end{equation*}
with $\boldsymbol{J_a}(\boldsymbol{\phi}) \triangleq \boldsymbol{\Xi^{-1}}\boldsymbol{B^{-1}}(\boldsymbol{\phi})\boldsymbol{\hat{M}}(\boldsymbol{\phi})$.

\section*{Acknowledgement}

\vspace{-0.05cm}

The authors would like to thank Dohyun Eom (Seoul National University) for his help conducting the experiments.

\normalem
\bibliographystyle{IEEEtran}
\bibliography{myreference}

\end{document}